\pdfoutput=1

\documentclass[11pt]{article}

\usepackage[]{ACL2023}

\usepackage{times}
\usepackage{latexsym,bm}
\usepackage{amsthm,thmtools, thm-restate}

\usepackage[T1]{fontenc}

\usepackage[utf8]{inputenc}

\usepackage{microtype}

\usepackage{inconsolata}

\usepackage{amsmath}
\usepackage{amssymb}
\usepackage{mathtools}
\usepackage{amsthm}
\usepackage{thmtools}
\usepackage{thm-restate}
\usepackage{graphicx}
\usepackage{placeins}
\usepackage{amsfonts}
\usepackage{paralist}
\usepackage{subcaption}
\usepackage{pifont}

\usepackage{multirow}
\usepackage{wasysym}
\theoremstyle{plain}
\usepackage{siunitx}
\usepackage{paralist}
\usepackage{fancyhdr}

\theoremstyle{definition}

\theoremstyle{remark}

\newcommand{\E}{\operatorname*{\mathbb E}}
\newcommand{\bfY}{{\mathbf Y}}
\newcommand{\rmY}{{\mathrm Y}}

\title{$f$-Divergence Minimization for Sequence-Level Knowledge Distillation}

\author{
Yuqiao Wen$^{1,*}$, Zichao Li$^{2,*}$, Wenyu Du$^3$, Lili Mou$^{1,4}$ \vspace{0.2cm} \\
\normalsize $^1$Dept.~Computing Science \& Alberta Machine Intelligence Institute (Amii), 
University of Alberta\\
\normalsize$^2$Mila, McGill University \quad\quad $^3$The University of Hong Kong \\
\normalsize$^4$Canada CIFAR AI Chair, Amii \quad\quad\ \ \ \quad $^*$Equal contribution \\
\normalsize\texttt{\url{yq.when@gmail.com}, \url{zichao.li@mila.quebec}} \\
\normalsize \texttt{\url{wenyudu@yahoo.com}, \url{doublepower.mou@gmail.com}} \\
}

\begin{document}

\maketitle

\newcommand{\fdistill}{\textsc{$f$-distill}}
\newcommand\scalemath[2]{\scalebox{#1}{\mbox{\ensuremath{\displaystyle #2}}}}

\begin{abstract}

Knowledge distillation (KD) is the process of transferring knowledge from a large model to a small one.
It has gained increasing attention in the natural language processing community, driven by the demands of compressing ever-growing language models.
In this work, we propose an \textsc{$f$-distill} framework, which formulates sequence-level knowledge distillation as minimizing a generalized $f$-divergence function.
We propose four distilling variants under our framework and show that existing SeqKD and ENGINE approaches are approximations of our \fdistill{} methods.
We further derive step-wise decomposition for our \fdistill{}, reducing intractable sequence-level divergence to word-level losses that can be computed in a tractable manner.
Experiments across four datasets show that our methods outperform existing KD approaches, and that our symmetric distilling losses can better force the student to learn from the teacher distribution.\footnote{
Our code is available at \url{https://github.com/MANGA-UOFA/fdistill}
}

\end{abstract}

\section{Introduction} \label{sec:intro}
\allowdisplaybreaks

\renewcommand{\headrulewidth}{0pt}
\fancyhf{}
\cfoot{In \textit{ACL'23}, pages 10817–10834, with additional notes from the conference.}
\thispagestyle{fancy}

Increasingly large language models have continued to achieve state-of-the-art performance across various natural language generation tasks, such as data-to-text generation~\citep{lebret-etal-2016-neural,li-liang-2021-prefix}, summarization~\citep{paulus2018deep,pmlr-v119-zhang20ae}, and dialogue generation~\citep{dialogue-reinforcement,zhang-etal-2020-dialogpt}.
However, super-large language models are inaccessible to most users and researchers due to their prohibitively large model size, emphasizing the importance of high-performing, parameter-efficient small neural models.

A widely used approach to training small models is \textit{knowledge distillation}~\cite[KD,][]{hinton2015distilling}, where the small model (known as the \textit{student}) learns the knowledge from a much larger model (known as the \textit{teacher}). 
KD has shown great success in helping smaller models achieve competitive performance across a wide range of applications~\citep{sun2019pkd,jiao-etal-2020-tinybert,shleifer2020pre}.

Existing KD approaches can be categorized into two main branches: representation matching and distribution matching.
The former aims to imitate the teacher's real-valued intermediate-layer representations, say, with mean squared error~\citep{sun2019pkd,jiao-etal-2020-tinybert}.
Our work focuses on the latter, distribution matching, where the student model learns the teacher's predictive distribution.
\citet{hinton2015distilling} minimize the cross-entropy loss against the teacher-predicted soft labels, which is equivalent to minimizing the Kullback--Leibler (KL) divergence between the teacher and student.
\citet{kim-rush-2016-sequence} propose SeqKD, arguing that KL divergence should be minimized at the sequence level for language models. However, such an approach tends to learn an overly smooth student distribution to cover the entire support  of the teacher distribution due to the asymmetric nature of the KL divergence. This is often known as the \textit{mode-averaging} problem (Figure~\ref{fig:modes}a).

\citet{tu-etal-2020-engine} propose ENGINE, a non-autoregressive translation model that minimizes the energy function defined by the teacher's output distribution. It can be shown that their objective is related to minimizing the reverse KL between the teacher and student (see Section~\ref{subsec:fdistill}). This, on the other hand, results in the \textit{mode-collapsing} problem, where the student model is overly concentrated on certain high-probability regions of the teacher distribution (Figure~\ref{fig:modes}b).

\begin{figure}[!t]
    \centering
    \includegraphics[width=\columnwidth]{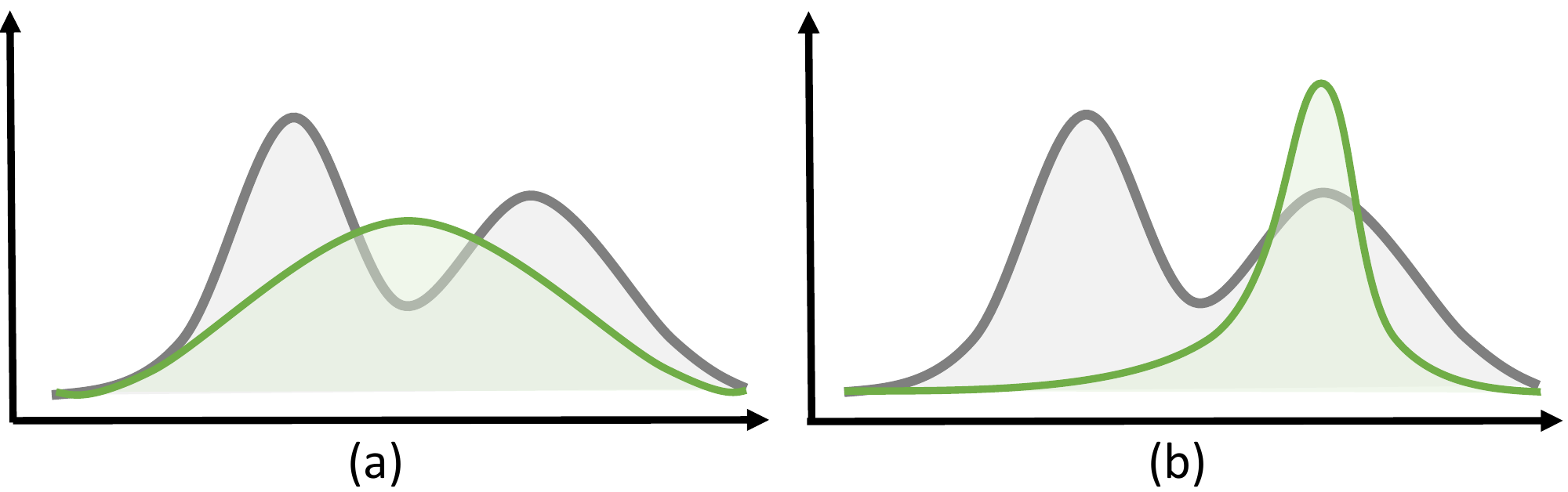}
    \caption{ Typical phenomena of (a) KL distillation and (b) reverse KL distillation.
    Gray curve: teacher distribution. Green curve: student distribution. }
\label{fig:modes}
\vspace*{-1em}
\end{figure}

In this paper, we address  knowledge distillation for text generation tasks, and propose \textsc{$f$-distill}, a unified framework that formulates sequence-level knowledge distillation as minimizing $f$-divergence functions.
Existing SeqKD~\citep{kim-rush-2016-sequence} and ENGINE~\citep{tu-etal-2020-engine} methods are  approximations of KL and reverse KL distillations under the \textsc{$f$-distill} framework. Further, our formulation naturally leads to Jensen--Shannon (JS) divergence and total variation distance (TVD) distillations, where the divergence measures are symmetric in teacher and student distributions. This forces the student to learn the teacher's distribution better, alleviating mode averaging and collapsing problems.

We further develop efficient algorithms for our \textsc{$f$-distill} approach. First, we show that sequence-level \textsc{$f$}-divergence can be decomposed step by step either exactly or as an upper bound. Second, we propose to sample from the teacher model in an offline manner, mitigating the additional training cost of symmetric divergence measures (namely, JS and TVD).

We evaluated our approach on four datasets: DART for data-to-text generation~\citep{nan-etal-2021-dart}, XSum for summarization~\citep{narayan-etal-2018-xsum}, WMT16 EN-RO for machine translation~\citep{bojar-etal-2016-wmt16}, and Commonsense Dialogue~\citep{zhou-etal-2021-commonsense}.
Experiments show that our proposed \textsc{$f$-distill} variants consistently outperform existing distribution-matching KD methods, allowing \textsc{$f$-distill} to achieve an add-on performance improvement when combined with representation-matching KD methods.
Further, results show that our symmetric distilling losses outperform asymmetric ones, confirming that extreme mode averaging or collapsing is not ideal.

To sum up, our contributions are three-fold:
\begin{compactenum}
    \item We propose \textsc{$f$-distill}, a novel distilling framework that generalizes KL distillation and balances mode averaging and collapsing;
    \item We derive step-wise decomposition and propose an offline sampling method to efficiently compute sequence-level $f$-divergences; and 
    \item We provide detailed experimental analysis across four text generation datasets to show the effectiveness of our approach.
\end{compactenum}

\section{Approach}

In this section, we first review classic knowledge distilling (KD) algorithms and analyze their drawbacks.
Then, we propose \textsc{$f$-distill}, a generalized distilling framework for sequence-level distillation.

\subsection{Classic KD and Its Drawbacks} \label{subsec:classic-kd}
In classic KD, the KL divergence is often used to train the student model to match the teacher's distribution~\citep{hinton2015distilling}.
For autoregressive text generation, this is decomposed into a step-wise KL divergence:
\begin{align}\label{eq:classicKD}
    J_{\text{KD}} = - \sum_{t=1}^{|\mathbf y|} \sum_{\mathrm Y_t\in V} p(\mathrm Y_t| \mathbf y_{<t}) \log q_{\theta}(\mathrm Y_t| \mathbf y_{<t})
\end{align}
where $\mathbf y = \mathrm y_1\cdots\mathrm y_T$ is the ground-truth sequence and  $V$ is the vocabulary.
$p$ and $q_\theta$ are the predicted distributions of the teacher and student, respectively; they can be additionally conditioned on an input sequence $\mathbf x$, which is omitted here for simplicity. In Eqn.~\eqref{eq:classicKD}, we present the loss by a cross-entropy term, which only differs from the KL divergence $D_\text{KL}(p\|q_\theta)$ by a constant.  

\citet{kim-rush-2016-sequence} propose SeqKD and minimize cross-entropy loss at the sequence level as
\begin{align}
    J_{\text{SeqKD}} &= \mathbb E_{\mathbf Y \sim p} [- \log q_\theta (\mathbf Y)]
\end{align}
In practice, the expectation over the sentence space is intractable, so they approximate it with a hard sequence $\mathbf y$ generated by beam search on the teacher model. Their loss is
\begin{align}
    \hat{J}_{\text{SeqKD}} = - \sum_{t=1}^{|\mathbf y|} \log q_{\theta}(\mathrm y_t|\mathbf y_{<t})
    \label{eq:seqkd-approx}
\end{align}

However, KL-based losses may cause the student model to learn an overly smooth function.
This can be seen in Eqn.~\eqref{eq:seqkd-approx}, where the loss term $-\log q_{\theta}(\mathrm y_t| \mathbf y_{<t})$ goes to infinity when the student assigns a low probability to a teacher-generated token.
As a result, minimizing KL forces the student model to spread its probability mass widely over the vocabulary.
When the student has a limited model capacity, this further leads to the mode-averaging problem, where the learned distribution may not capture any mode of the teacher distribution, as shown in Figure~\ref{fig:modes}a.

\subsection{Our Proposed \textsc{$f$-distill} Framework} \label{subsec:fdistill}

To this end, we propose a generalized \mbox{\textsc{$f$-distill}} framework, a family of distilling methods based on $f$-divergence functions~\citep{classic-fdiv,ieee-fdiv}.

Formally, the $f$-divergence of two distributions is defined as
\begin{align}
    D_f(p(t)\|q(t)) = \sum_{t} q(t)\, f\!\left(\frac{p(t)} {q(t)}\right) &
\end{align}
where $f: (0, \infty) \rightarrow \mathbb R $ is a convex function such that $f(1) = 0$.
Table~\ref{tab:f-div-map} summarizes common divergence functions. 

In the rest of this subsection, we will first present Kullback--Leibler (KL) and reverse KL (RKL) distilling methods, which are closely related to previous work~\cite{kim-rush-2016-sequence,tu-etal-2020-engine}. Then, we will propose Jensen--Shannon (JS) and total variation distance (TVD) distillations; they are based on symmetric $f$-divergence functions, and are able to force the student to better learn from the teacher distribution.

\begin{table}[!t]
\centering
\resizebox{\columnwidth}{!}{
\begin{tabular}{l|l}
\hline
Divergence & $f(t)$ \\ \hline
Kullback--Leibler (KL)                & $t \log t$           \\
Reverse KL (RKL)                 & $- \log t$           \\
Jensen--Shannon (JS)                  & $-(t+1) \log (\frac{t+1}{2}) + t \log t$                    \\
Total variation distance (TVD)       & $\frac12 |t-1|$      \\ \hline
\end{tabular}
}
\caption{Common divergence functions and their corresponding choices of $f$.}
\label{tab:f-div-map}
\end{table}

\textbf{Kullback--Leibler (KL) distillation.} Recall that we denote the teacher distribution by $p$ and the student distribution by $q_\theta$.
Using the common KL divergence leads to the standard distilling objective
\begin{align}\label{eq:KL1}
    & J_{\text{KL}} = D_\text{KL}(p\|q_\theta)=\mathbb E_{\mathbf Y\sim p} \left[ \log \frac{p(\mathbf Y)}{q_\theta (\mathbf Y)} \right]  \\
    &\approx - \sum_{t=1}^{|\mathbf y|} \sum_{\mathrm Y_t \in V} p(\mathrm Y_t| \mathbf{y}_{<t}) \log q_{\theta}(\mathrm Y_t| \mathbf{y}_{<t}) + \text{const}
\end{align}
where $\mathbf y$ is sampled\footnote{In our method, the expectation \eqref{eq:KL1} is approximated by one Monte Carlo-sampled sequence. We denote a sampled sequence by a lower letter $\mathbf y$.} from the teacher distribution $p$.
Here, the constant is the entropy of $p$, which can be ignored as it does not involve the student parameters.

Similar to SeqKD, such KL distillation may also suffer from the mode-averaging problem and learn an overly smooth distribution, because $q_\theta$ is in the denominator in \eqref{eq:KL1}.

However, our KL distillation differs from \mbox{SeqKD} in that we adopt soft labels from the teacher model, i.e., keeping the entire distribution of $p(\mathrm Y_t| {\mathbf{y}}_{<t})$, whereas SeqKD uses a certain decoded sequence~${\mathbf y}$ as shown in Eqn.~\eqref{eq:seqkd-approx}. Experiments will show that our soft labels provide more information than hard SeqKD in sequence-level distilling tasks, which is consistent with early evidence~\citep{bucilua2006model, hinton2015distilling}.

\textbf{Reverse KL (RKL) distillation.}
We propose RKL distillation, which can potentially address the mode-averaging problem:
\begin{align}
    & J_{\text{RKL}} =D_\text{KL}(q_\theta\|p)= \mathbb E_{\mathbf Y'\sim q_\theta} \left[\log\frac{q_\theta (\mathbf Y')}{p(\mathbf Y')}\right] \nonumber \\
    &\approx \sum_{t=1}^{|\mathbf y'|} \sum_{\mathrm Y'_t \in V} \Big[ q_\theta(\mathrm Y'_t | \mathbf y'_{<t}) \log q_\theta(\mathrm Y'_t| \mathbf y'_{<t}) \nonumber \\
    &\quad\quad - q_\theta(\mathrm Y'_t| \mathbf y'_{<t}) \log p(\rmY'_t | \mathbf y_{<t}') \Big]
    \label{eq:rkl}
\end{align}
where $\mathbf y'$ is sampled from the student distribution.
In other words, the loss can be decomposed into the negative log probability of the teacher's predicted probability plus the entropy of the student. 

RKL does not suffer from mode averaging because the student distribution~$q_\theta$ goes to the numerator and does not have to cover the teacher distribution. Also, the entropy term in \eqref{eq:rkl} penalizes the student for learning a wide-spreading distribution, further mitigating the mode-averaging problem.

However, RKL distillation has the opposite problem, known as mode collapsing, where the student only learns one or a few modes of the teacher distribution. This is because the RKL loss would be large, if $q_\theta(\mathbf Y')$ is high but $p(\mathbf Y')$ is low for some $\mathbf Y'$. As a result, the student tends to overly concentrate its probability mass on certain high-probability regions of the teacher model, which may not be ideal either (Figure~\ref{fig:modes}b).

RKL distillation is related to the ENGINE distilling approach~\citep{tu-etal-2020-engine}, which was originally designed to minimize the energy function defined by the teacher model.
In particular, the ENGINE objective approximates RKL less the student entropy: $J_{\text{ENGINE}} = \mathbb E_{\mathbf Y \sim q_\theta}[- \log p(\mathbf Y)]$.
Therefore, ENGINE also suffers from the mode-collapsing problem, resembling RKL distillation.

\bigskip\textbf{Remarks.} KL and RKL have the mode-averaging or mode-collapsing problem, because $D_{\text{KL}}(\cdot\|\cdot)$ is asymmetric in its two arguments, requiring the second distribution to cover the support of the first.
In the following, we will propose two \textsc{$f$-distill} variants based on symmetric divergence functions to seek a balance between these two extremes.

\bigskip\textbf{Jenson--Shannon (JS) distillation.}
Our proposed JS distillation minimizes the JS divergence, which measures the difference between two distributions and their average.
We derive the step-wise decomposition of the sequence-level JS loss:
\begin{align}
    &J_{\text{JS}} = \frac12 \E_{\mathbf Y \sim p} \left[ \log \tfrac{p(\mathbf Y)}{m(\mathbf Y)} \right] \nonumber + \frac12 \E_{\mathbf Y' \sim q_\theta} \left[ \log \tfrac{q_\theta (\mathbf Y')}{m(\mathbf Y')} \right] \nonumber \\
    &\approx \frac12 \sum_{t=1}^{|\mathbf y|} \sum_{\mathrm Y_t \in V} - p(\mathrm Y_t| \mathbf y_{<t}) \log ( m(\mathrm Y_t| \mathbf y_{<t}) ) \nonumber \\
    &+ \frac12 \sum_{t=1}^{|\mathbf y'|} \sum_{\mathrm Y'_t \in V} \left[q_\theta(\mathrm Y'_t| \mathbf y'_{<t}) \log(q_\theta(\mathrm Y'_t| \mathbf y'_{<t}) \right. \nonumber \\
    &\left. -q_\theta(\mathrm Y'_t| \mathbf y'_{<t}) \log ( m(\mathrm Y'_t| \mathbf y'_{<t}) ) \right] + \text{const}\label{eq:JS-main}
\end{align}
where $\mathbf y$ and $\mathbf y'$ are sampled from the teacher's and student's distributions, which are compared with their average $m(\cdot)=\frac12 p(\cdot) + \frac12q_\theta(\cdot)$. Appendix~\ref{apdx:proof} provides the proof of this decomposition, and Subsection~\ref{subsec:imple-consid} presents an efficient approximation by avoiding on-the-fly sampling from the teacher.

\textbf{Total variation distance (TVD) distillation.}
Our \textsc{$f$-distill} gives rise to another novel distilling variant based on the total variation distance
\begin{align}
    J_{\text{TVD}} = \frac12 \sum_{\mathbf Y} |q_\theta(\mathbf Y) - p(\mathbf Y)|
\end{align}
Unlike JS divergence, TVD measures the $\ell^1$ norm between two distributions, and therefore does not have the $\log$ operator, making the gradient more stable than JS distillation.

We would like to decompose the sequence-level TVD step by step due to the intractable summation over the sentence space. However, TVD decomposition is non-trivial, and we show in Appendix~\ref{apdx:proof} that the sequence-level TVD is upper bounded by step-wise terms, being our objective to minimize:

\begin{align} 
    & J_{\text{TVD}} = \frac12 \sum_{\mathbf Y} |q_\theta(\mathbf Y) - p(\mathbf Y)| \nonumber \\ 
    & \scalemath{0.95} {\le \frac14 \E_{\mathbf Y \sim p} \Bigg[ \sum_{t=1}^{|\mathbf Y|} \sum_{\mathrm Y_t \in V} |q_\theta(\mathrm Y_t|\mathbf{Y}_{<t}) - p(\mathrm Y_t|\mathbf{Y}_{<t})| \Bigg] } \nonumber \\
    & \scalemath{0.95} { + \frac14 \E_{\mathbf{Y}' \sim q_\theta} \Bigg[ \sum_{t=1}^{|\mathbf Y'|} \sum_{\mathrm Y_t' \in V} |q_\theta(\mathrm Y_t'|\mathbf{Y}'_{<t}) - p(\mathrm Y_t'|\mathbf{Y}'_{<t})| \Bigg] } \nonumber \\
    & \approx \frac14 \sum_{t=1}^{|\mathbf y|} \sum_{\mathrm Y_t \in V} |q_\theta(\mathrm Y_t|\mathbf y_{<t}) - p(\mathrm Y_t|\mathbf y_{<t})| \nonumber \\
    & + \frac14 \sum_{t=1}^{|\mathbf y'|} \sum_{\mathrm Y'_t \in V} |q_\theta(\mathrm Y'_t|\mathbf{y}'_{<t}) - p(\mathrm Y'_t|\mathbf{y}'_{<t})| \label{eq:TVD}
\end{align}
where ${\mathbf y}$ and ${\mathbf y}'$ are again sampled from the teacher and student models, respectively.

\bigskip
\textbf{Summary.} In this part, we have described our proposed \textsc{$f$-distill} framework with four variants based on different $f$-divergence functions. We have also presented their step-wise decompositions, whose justification is summarized by the following theorem, proved in Appendix~\ref{apdx:proof}.

\begin{restatable}{thm}{decomposition}
(a) The sequence-level KL, RKL, and JS divergences can be decomposed exactly into step-wise terms. (b) The sequence-level TVD can be upper bounded by step-wise terms.
\label{thm:decomposition}
\end{restatable}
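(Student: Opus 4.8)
The plan is to reduce both parts to a single tool: the chain rule for KL divergence between autoregressive distributions. Throughout I treat the end-of-sequence symbol as an ordinary element of $V$, so that the factorization $p(\mathbf Y)=\prod_{t=1}^{|\mathbf Y|} p(\mathrm{Y}_t\mid\mathbf Y_{<t})$ (and likewise for $q_\theta$) subsumes variable-length sequences without special cases. The key lemma I would prove first is that, for any two autoregressive distributions $p$ and $q$,
\[
 D_{\mathrm{KL}}(p\,\|\,q) \;=\; \E_{\mathbf Y\sim p}\!\left[\,\sum_{t=1}^{|\mathbf Y|}\sum_{\mathrm{Y}_t\in V} p(\mathrm{Y}_t\mid\mathbf Y_{<t})\log\frac{p(\mathrm{Y}_t\mid\mathbf Y_{<t})}{q(\mathrm{Y}_t\mid\mathbf Y_{<t})}\right].
\]
This follows by taking the logarithm of the two factorizations, using linearity of expectation to split the sum over positions, and then the tower property of conditional expectation, which collapses the expectation of the $t$-th log-ratio over the whole sequence to an expectation over the prefix $\mathbf Y_{<t}$ of a per-step conditional KL. Part~(a) for KL is this lemma with $(p,q)=(p,q_\theta)$ and a teacher-sampled sequence; for RKL it is the same lemma with $(p,q)=(q_\theta,p)$ and a student-sampled sequence; expanding $\log\frac{p}{q}$ into an entropy/cross-entropy pair recovers the exact forms stated in Eqns.~\eqref{eq:KL1} and~\eqref{eq:rkl}.

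For JS, I would write $D_{\mathrm{JS}}(p\|q_\theta)=\tfrac12 D_{\mathrm{KL}}(p\|m)+\tfrac12 D_{\mathrm{KL}}(q_\theta\|m)$ and note that the reference $m$ can itself be taken autoregressive, with step-wise conditionals $m(\mathrm{Y}_t\mid\mathbf Y_{<t})=\tfrac12 p(\mathrm{Y}_t\mid\mathbf Y_{<t})+\tfrac12 q_\theta(\mathrm{Y}_t\mid\mathbf Y_{<t})$; the lemma then applies verbatim to each of the two KL terms, the entropy of $p$ landing in the additive constant and the (parameter-dependent) negative entropy of $q_\theta$ being kept, which is exactly Eqn.~\eqref{eq:JS-main}. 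The one place that needs care is to be explicit that the reference used throughout is the product of step-wise mixtures, so that the stated equality is genuinely exact; I would take this as the working definition of the sequence-level JS loss.

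The main obstacle is part~(b): the absolute value in TVD does not factorize and there is no entropy-style identity to exploit, so the argument is an induction on sequence length driven by the triangle inequality. Factoring off the first token, $q_\theta(\mathbf Y)=q_\theta(\mathrm{Y}_1)\,q_\theta(\mathbf Y_{2:}\mid\mathrm{Y}_1)$ and similarly for $p$, I would add and subtract $q_\theta(\mathrm{Y}_1)\,p(\mathbf Y_{2:}\mid\mathrm{Y}_1)$ and apply $|a-b|\le|a-c|+|c-b|$, then sum over $\mathbf Y$ and use $\sum_{\mathbf Y_{2:}} p(\mathbf Y_{2:}\mid\mathrm{Y}_1)=1$ to obtain
\[
 \sum_{\mathbf Y}\bigl|q_\theta(\mathbf Y)-p(\mathbf Y)\bigr|
 \;\le\; \sum_{\mathrm{Y}_1\in V}\bigl|q_\theta(\mathrm{Y}_1)-p(\mathrm{Y}_1)\bigr|
 \;+\; \E_{\mathrm{Y}_1\sim q_\theta}\!\Bigl[\,\textstyle\sum_{\mathbf Y_{2:}}\bigl|q_\theta(\mathbf Y_{2:}\mid\mathrm{Y}_1)-p(\mathbf Y_{2:}\mid\mathrm{Y}_1)\bigr|\Bigr].
\]
Iterating this recursion on the conditional TVD inside the expectation yields $\sum_{\mathbf Y}|q_\theta(\mathbf Y)-p(\mathbf Y)|\le\E_{\mathbf Y\sim q_\theta}\bigl[\sum_{t=1}^{|\mathbf Y|}\sum_{\mathrm{Y}_t\in V}|q_\theta(\mathrm{Y}_t\mid\mathbf Y_{<t})-p(\mathrm{Y}_t\mid\mathbf Y_{<t})|\bigr]$; running the identical peeling but adding and subtracting $p(\mathrm{Y}_1)\,q_\theta(\mathbf Y_{2:}\mid\mathrm{Y}_1)$ instead gives the same bound with the outer expectation taken under $p$. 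Averaging the two bounds and multiplying by $\tfrac12$ produces the symmetric $\tfrac14+\tfrac14$ expression of Eqn.~\eqref{eq:TVD}. Finally, in every case the outer sequence-level expectations are replaced by a single Monte Carlo sample ($\mathbf y\sim p$, $\mathbf y'\sim q_\theta$), which is just the standard one-sample estimate and needs no separate argument; the only genuinely delicate points are the length induction for TVD and the consistent bookkeeping of the additive constants across the two KL pieces of JS.
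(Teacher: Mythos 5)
Your chain-rule lemma for KL, its application to RKL with the arguments swapped, and your TVD argument are all sound. For TVD, peeling off the \emph{first} token and adding/subtracting the two cross terms is a minor (arguably cleaner) variation on the paper's route, which factors off the \emph{last} step, multiplies and divides by $p(\mathbf Y_{1:T-1})$, and then applies the triangle inequality; both yield the same pair of upper bounds (one under $\E_p$, one under $\E_{q_\theta}$) whose average is Eqn.~\eqref{eq:TVD}.

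The gap is in the JS part. You assert that the mixture $m=\tfrac12 p+\tfrac12 q_\theta$ ``can itself be taken autoregressive, with step-wise conditionals $m(\mathrm Y_t\mid\mathbf Y_{<t})=\tfrac12 p(\mathrm Y_t\mid\mathbf Y_{<t})+\tfrac12 q_\theta(\mathrm Y_t\mid\mathbf Y_{<t})$.'' That is false: the true conditional of the sequence-level mixture is
\begin{align*}
m(\mathrm Y_t\mid\mathbf Y_{1:t-1})&=\frac{m(\mathbf Y_{1:t})}{m(\mathbf Y_{1:t-1})}=w\,p(\mathrm Y_t\mid\mathbf Y_{1:t-1})+(1-w)\,q_\theta(\mathrm Y_t\mid\mathbf Y_{1:t-1}),\\
w&=\frac{p(\mathbf Y_{1:t-1})}{p(\mathbf Y_{1:t-1})+q_\theta(\mathbf Y_{1:t-1})},
\end{align*}
a prefix-dependent weighting that equals $\tfrac12$ only at $t=1$. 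The chain-rule lemma does apply verbatim to $D_{\mathrm{KL}}(p\|m)$ and $D_{\mathrm{KL}}(q_\theta\|m)$ --- this is exactly what the paper's proof does --- but only with this true conditional. If you instead substitute the unweighted average of conditionals, the product $\prod_t m(\mathrm Y_t\mid\mathbf Y_{<t})$ is no longer $\tfrac12 p(\mathbf Y)+\tfrac12 q_\theta(\mathbf Y)$, so the quantity you decompose is not the sequence-level JS divergence; ``taking this as the working definition'' redefines the object of part (a) rather than proving it. (The paper itself flags precisely this point in its post-conference notes, Eqns.~\eqref{eq:wrong} vs.~\eqref{eq:correct}: the step-wise average is what was implemented, as an approximation, but the exact decomposition claimed by the theorem requires the ratio of prefix marginals.) The fix is small --- state and apply your lemma with $m(\mathrm Y_t\mid\mathbf Y_{<t}):=m(\mathbf Y_{1:t})/m(\mathbf Y_{1:t-1})$ --- but as written your JS step proves a different statement.
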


\subsection{Implementation Considerations} \label{subsec:imple-consid}
\textbf{Efficient approximation.}
Symmetric distilling losses (i.e., JS and TVD) are slow to compute, because they require sampling from both teacher and student models during training.

We propose to mitigate this by offline sampling for the teacher model to improve training efficiency. Specifically, we obtain teacher samples, i.e., $\mathbf y$ in Eqns.~\eqref{eq:JS-main} and~\eqref{eq:TVD}, beforehand and keep them fixed during training.
This is feasible because the teacher model is unchanged and hence does not require multiple inferences, whereas the student model is continuously updated and thus requires inference in an online fashion.
Experiments show that such a treatment  significantly improves the training efficiency for both JS and TVD distillations.

\textbf{Pre-distillation.}
We warm-start our student model with the techniques developed by~\citet{shleifer2020pre}, who combine MLE training, word-level KL, and hidden state matching. Such a pre-distilling process is crucial to our \textsc{$f$-distill} method, because most variants (namely, RKL, JS, and TVD distillations) require sampling from a student, but a randomly initialized student model generates poor samples, making the distilling process less meaningful.

Notice that, for a fair comparison, all baseline models are built upon the same pre-distilling process. This further confirms that our \textsc{$f$-distill} is compatible with existing techniques and yields add-on performance gain (shown in Section~\ref{subsection:results}).

\section{Experiments}

\begin{table*}[!t]

\begin{subtable}[t]{\textwidth}
\centering
\resizebox{0.768\textwidth}{!}{
\begin{tabular}{|ll|cccccc|}
\hline
\multicolumn{2}{|l|}{\multirow{2}{*}{Model}} & \multicolumn{6}{c|}{DART}                      \\ \cline{3-8} 
\multicolumn{2}{|l|}{} &
  BLEU4$^\uparrow$ &
  METEOR$^\uparrow$ &
  TER$^\downarrow$ &
  BERTScore$^\uparrow$ &
  MoverScore$^\uparrow$ &
  BLEURT$^\uparrow$ \\ \hline
\multicolumn{2}{|l|}{Teacher}                & 48.56 & 39.28 & 45.45 & 83.04 & 68.17 & 40.56 \\ \hline
\multicolumn{1}{|l|}{\multirow{8}{*}{Student}} &
  Non-distill (MLE) &
  43.12 &
  35.71 &
  49.97 &
  79.76 &
  65.65 &
  29.10 \\
\multicolumn{1}{|l|}{}     & Pre-distill     & 45.60 & 36.99 & 47.10 & 81.39 & 66.75 & 34.08 \\ \cline{2-8} 
\multicolumn{1}{|l|}{}     & SeqKD           & 45.54 & 37.17 & 47.49 & 81.15 & 66.65 & 32.88 \\
\multicolumn{1}{|l|}{}     & ENGINE          & 44.40 & 36.51 & 50.63 & 80.18 & 66.20 & 30.94 \\ \cline{2-8} 
\multicolumn{1}{|l|}{}     & KL              & 46.24 & 37.45 & 46.89 & 81.60 & 67.07 & 35.31 \\
\multicolumn{1}{|l|}{}     & RKL             & 45.63 & 37.35 & 47.91 & 81.41 & 67.02 & 35.08 \\
\multicolumn{1}{|l|}{} &
  JS &
  \underline{46.85} &
  \underline{37.75} &
  \underline{46.50} &
  \underline{81.93} &
  \underline{67.30} &
  \underline{36.81} \\
\multicolumn{1}{|l|}{} &
  TVD &
  \textbf{46.95} &
  \textbf{37.88} &
  \textbf{46.35} &
  \textbf{82.08} &
  \textbf{67.36} &
  \textbf{37.17} \\ \hline
\end{tabular}
}
\label{tab:dart-xsum}
\end{subtable}

\vspace{.05cm}
\begin{subtable}[t]{\textwidth}
\centering
\resizebox{\textwidth}{!}{
\begin{tabular}{|ll|ccc|ccc|ScS|}
\hline
\multicolumn{2}{|l|}{\multirow{2}{*}{Model}} &
  \multicolumn{3}{c|}{XSum} &
  \multicolumn{3}{c|}{WMT16 EN-RO} &
  \multicolumn{3}{c|}{Commonsense Dialogue} \\ \cline{3-11} 
\multicolumn{2}{|l|}{} &
  \multicolumn{1}{c}{ROUGE-1$^\uparrow$} &
  \multicolumn{1}{c}{ROUGE-2$^\uparrow$} &
  \multicolumn{1}{c|}{ROUGE-L$^\uparrow$} &
  \multicolumn{1}{c}{BLEU4$^\uparrow$} &
  \multicolumn{1}{c}{chrF$^\uparrow$} &
  \multicolumn{1}{c|}{TER$^\downarrow$} &
  \multicolumn{1}{c}{BLEU1$^\uparrow$} &
  \multicolumn{1}{c}{BLEU2$^\uparrow$} &
  \multicolumn{1}{c|}{BERTScore$^\uparrow$} \\ \hline
\multicolumn{2}{|l|}{Teacher} &
  45.12 &
  22.26 &
  37.18 &
  25.82 &
  55.76 &
  60.57 &
  11.67 &
   5.03 &
  47.69 \\ \hline
\multicolumn{1}{|l|}{\multirow{8}{*}{Student}} &
  Non-distill (MLE) &
  30.00 &
  10.67 &
  24.40 &
  19.90 &
  49.79 &
  69.48 &
  10.23 &
   3.56 &
  45.15 \\
\multicolumn{1}{|l|}{} &
  Pre-distill &
  40.58 &
  17.79 &
  32.55 &
  20.68 &
  50.51 &
  68.38 &
   9.95 &
   3.63 &
  46.22 \\ \cline{2-11} 
\multicolumn{1}{|l|}{} &
  SeqKD &
  39.13 &
  17.53 &
  32.34 &
  21.20 &
  50.81 &
  67.66 &
  10.85 &
  4.17  &
  46.94 \\
\multicolumn{1}{|l|}{} &
  ENGINE &
  39.19 &
  16.18 &
  31.23 &
  17.65 &
  48.37 &
  84.02 &
  10.13 &
  4.26  &
  46.91 \\ \cline{2-11} 
\multicolumn{1}{|l|}{} &
  KL &
  41.28 &
  18.98 &
  33.71 &
  21.45 &
  51.12 &
  \textbf{66.74} &
  9.81 &
  3.52 &
  45.80 \\
\multicolumn{1}{|l|}{} &
  RKL &
  \underline{41.69} &
  19.02 &
  33.92 &
  20.46 &
  50.33 &
  70.78 &
  10.48 &
  4.01  &
  46.68 \\
\multicolumn{1}{|c|}{} &
  JS &
  41.65 &
  \underline{19.22} &
  \underline{34.03} &
  \textbf{21.91} &
  \textbf{51.5} &
  \underline{66.86} &
  \textbf{11.55} &
  \textbf{4.83} &
  \textbf{47.61} \\
\multicolumn{1}{|l|}{} &
  TVD &
  \textbf{41.76} &
  \textbf{19.30} &
  \textbf{34.10} &
  \underline{21.73} &
  \underline{51.13} &
  66.94 &
  \underline{11.39} &
  \underline{4.73} &
  \underline{47.30} \\ \hline
\end{tabular}
}
\label{tab:wmt-cd}
\end{subtable}
\caption{Main results on the DART, XSum, WMT16 EN-RO, and Commonsense Dialogue (CD) datasets.
The best student result is in \textbf{bold} and the second best is \underline{underlined}.
$^{\uparrow/\downarrow}$The higher/lower, the better.
}
\label{tab:exp-rslt}
\end{table*}

\subsection{Settings}

\textbf{Datasets and metrics.} We evaluated \textsc{$f$-distill} on a wide range of text generation tasks.

\underline{$\bullet$ {DART.}} The DART dataset~\citep{nan-etal-2021-dart} is a popular data-to-text generation benchmark, where samples consist of structured data records and their corresponding text descriptions.
We report common string-matching metrics, BLEU~\citep{papineni-etal-2002-bleu}, METEOR~\citep{banerjee-lavie-2005-meteor}, and TER~\citep{snover-etal-2006-study}, as well as popular learned metrics, BERTScore~\citep{zhang2019bertscore}, MoverScore~\citep{zhao-etal-2019-moverscore}, and BLEURT~\citep{sellam-etal-2020-bleurt}.

\underline{$\bullet$ {XSum.}} Extreme Summarization~\cite[XSum,][]{narayan-etal-2018-xsum} is a large-scale dataset consisting of BBC articles and their one-sentence summaries.
We report ROUGE scores, the most widely used metrics for summarization~\citep{lin-2004-rouge}.

\underline{$\bullet$ WMT16 EN-RO.} This dataset contains parallel texts for English and Romanian, and is one of the commonly used machine translation datasets~\citep{bojar-etal-2016-wmt16}. We extracted 100K samples from the original dataset, as the teacher performance is nearly saturated at this size. We report BLEU, chrF~\citep{popovic-2015-chrf}, and TER scores for the translation quality, following existing machine translation literature~\citep{sennrich-etal-2016-neural,barrault-etal-2019-findings}.

\underline{$\bullet$ Commonsense Dialogue.} The Commonsense Dialogue dataset~\citep{zhou-etal-2021-commonsense} consists of dialogue sessions that are grounded on social contexts. We evaluated the output quality by BLEU and BERTScore. We only report BLEU1 and BLEU2, as higher-order BLEU scores are known to be unreliable for dialogue evaluation~\citep{liu-etal-2016-evaluate}.

\textbf{Model architectures.}
We evaluated \textsc{$f$-distill} using state-of-the-art teacher models for different tasks. 
We followed the encoder--decoder architecture and used BART~\citep{lewis-etal-2020-bart} as the teacher for DART and XSum.
We used T5~\citep{t52020}, another encoder--decoder model, for WMT16 EN-RO, as it excels at machine translation.
For Commonsense Dialogue, we followed~\newcite{zhang-etal-2020-dialogpt} and used DialoGPT, a decoder-only model pretrained on massive dialogue data.

Our student models followed the teachers' architectures, but we reduced the number of layers. In our experiments, we generally set the total number of layers to be four; specifically, encoder--decoder models had three encoder layers and one decoder layer, following the suggestion of deep encoders and shallow decoders in \citet{kasai2020deep}. For XSum, we set both the encoder and decoder to be three layers to compensate for the larger dataset. Additional experimental details can be found in Appendix~\ref{apdx:exp-details}.

\subsection{Results and Analyses} \label{subsection:results}

\begin{table*}[!t]
\centering
\resizebox{0.60\textwidth}{!}{
\begin{tabular}{|l|ll|ll|ll|ll|}
\hline
Dataset &
  \multicolumn{2}{c|}{DART} &
  \multicolumn{2}{c|}{XSum} &
  \multicolumn{2}{c|}{MT$_\text{EN-RO}$} &
  \multicolumn{2}{c|}{CD} \\ \hline
TeacherDist &
  \multicolumn{2}{c|}{26.10} &
  \multicolumn{2}{c|}{36.28} &
  \multicolumn{2}{c|}{23.13} &
  \multicolumn{2}{c|}{81.19} \\\hline\hline
Risk &
  $R_\text{llh}$ &
  $R_\text{cvg}$ &
  $R_\text{llh}$ &
  $R_\text{cvg}$ &
  $R_\text{llh}$ &
  $R_\text{cvg}$ &
  $R_\text{llh}$ &
  $R_\text{cvg}$ \\ \hline
KL  & 0.56 & 0.49 & 1.89 & 1.68 & 1.23 & 0.82 & 0.43 & 0.26 \\
RKL & 0.58 & 0.59 & 1.88 & 1.83 & 1.20 & 1.60 & 0.29 & 0.35 \\
TVD & 0.53 & 0.52 & 1.86 & 1.77 & 1.21 & 1.78 & 0.27 & 0.35 \\
JS  & 0.51 & 0.48 & 1.88 & 1.75 & 1.13 & 1.34 & 0.30 & 0.33 \\ \hline
\end{tabular}
}\vspace*{-0.2cm}
\caption{The likelihood risk $R_\text{llh}$ and the coverage risk $R_\text{cvg}$ for different \textsc{$f$-distill} variants. A lower number indicates a higher likelihood or better coverage. We show the teacher diversity for each task by distinct bi-gram percentage~\citep{li-etal-2016-dist} among five teacher-sampled outputs given a test input, which indicates the severity of multi-modality of a task.
}\vspace*{-0.2cm}
\label{tab:mutual-nll}
\end{table*}

\textbf{Main results.}
 Table~\ref{tab:exp-rslt} presents the main results of our \fdistill\ along with a number of competing methods in the four experiments.
 
We first trained a neural network without distillation. The network was identical to our student model in terms of the neural architecture and hyperparameters, but we trained it directly by maximum likelihood estimation (MLE) based on ground-truth target sequences. 
As seen, the non-distilling model performs significantly worse than distilling methods, which agrees with existing literature and justifies the need for knowledge distillation~\citep{hinton2015distilling,tang2019distilling,jiao-etal-2020-tinybert}.

We pre-distilled our student model based on~\citet{shleifer2020pre}, a classic distilling approach that combines ground-truth training, word-level distillation, and intermediate-layer matching. Our \fdistill\ approach requires pre-distillation, because it provides a meaningful initialization of the student model, from which our \fdistill\ would generate samples during training. That being said, all our distilling methods were built on the same pre-distilling model, constituting a fair comparison. The results show that, although the pre-distilling approach outperforms ground-truth MLE training, it is generally worse than other distilling methods. This implies that our contribution is ``orthogonal'' to existing methods, and that our \fdistill\ provides an add-on performance improvement.

We further experimented with SeqKD~\citep{kim-rush-2016-sequence} and ENGINE~\citep{tu-etal-2020-engine}, two established distilling methods in the distribution-matching category (see Section~\ref{sec:intro}). They learn from hard sequences rather than probabilities, and thus are hard approximations of our KL and RKL distillations, respectively (Section~\ref{subsec:classic-kd}). 
As seen, our soft label-based methods consistently outperform SeqKD and ENGINE. 
This suggests that soft labels (i.e., probabilities) provide more informative supervision signals than hard sentences for sequence-level distillation, which is consistent with early literature on classification tasks~\citep{bucilua2006model,hinton2015distilling}. 

Among our \fdistill{} variants, we further observe that symmetric distilling losses (JS and TVD) are consistently better than asymmetric ones (KL and RKL) across all datasets except for WMT16 EN-RO, where KL achieves a slightly better TER performance.
A plausible reason is that the machine translation task is semantically grounded: given a source text, there are limited ways to translate, because the model output has to preserve the meaning of the input sentence. This is analogous to learning a uni-modal distribution, where mode averaging does not occur because there is only one mode. Despite this, JS and TVD perform better in all other scenarios, as their symmetric divergence can force the student to better learn from its teacher distribution. They rank first or second for all tasks in terms of most of the metrics in Table~\ref{tab:exp-rslt}, consistently and largely outperforming previous methods.

\textbf{Likelihood and coverage.}
We further analyze the mode averaging and collapsing behaviors of different distilling methods in Table~\ref{tab:mutual-nll}.
We propose to measure these aspects by a likelihood risk $R_\text{llh}$ and a coverage risk $R_\text{cvg}$.

The \textit{likelihood risk} is computed by
$R_\text{llh} = \frac{1}{|\mathcal D_\text{student}|}\sum\nolimits_{{\mathbf y}'\in \mathcal D_\text{student}} - \log p({\mathbf y}')$. 
Here,  $\mathcal D_\text{student}$ is the set of sentences generated from the student, where we sample a sentence for each input in the test set; $p({\mathbf y}')$ is the teacher's predicted probability of a student-sampled sentence ${\mathbf y}'$.
A large likelihood risk suggests that the student may have averaged the teacher's modes, causing it to generate atypical sentences from the teacher's point of view~(Figure~\ref{fig:modes}a).

On the contrary, the \textit{coverage risk} is computed by
$R_\text{cvg} = \frac{1}{|\mathcal D_\text{teacher}|}\sum\nolimits_{{\mathbf y}\in \mathcal D_\text{teacher}} - \log q_\theta({\mathbf y})
$, 
where we use the student $q_\theta$ to evaluate a teacher-sampled sentence $\mathbf y\in \mathcal D_\text{teacher}$. This measures whether the teacher's samples are typical from the student's point of view, i.e., how well a student covers the support of the teacher's distribution.
A large coverage risk means that the teacher's typical outputs are not captured by the student, which is an indicator of mode collapse (Figure~\ref{fig:modes}b).

In addition, we notice that mode averaging and collapsing are significantly affected by how ``multi-modal'' a task is.
We propose to measure this by the distinct bi-gram percentage~\citep{li-etal-2016-dist} of the teacher model (denoted by TeacherDist): for each test input, we sampled five outputs from the teacher and computed the percentage of distinct bi-grams, which is then averaged across the test set. As seen in Table~\ref{tab:mutual-nll}, the dialogue task exhibits the highest diversity, i.e., it is the most multi-modal, whereas machine translation is the least multi-modal.

Comparing KL and RKL, we find that KL distillation consistently achieves lower $ R_\text{cvg}$ risks (i.e., better coverage) than RKL across all datasets.
This confirms that KL distillation yields a smooth student distribution that covers the teacher's, whereas RKL distillation does not have the covering property due to its mode-collapsing nature.

We further observe that RKL achieves significantly higher likelihood (given by a lower $R_\text{llh}$) on the Commonsense Dialogue dataset.
This shows that the mode-collapsing phenomenon of RKL distillation allows the student to generate plausible responses for the one-to-many dialogue task~(Figure~\ref{fig:modes}b), whereas the mode-averaging KL distillation puts the student in some desolate area in the teacher's distribution (Figure~\ref{fig:modes}a).
On the other hand, RKL does not achieve lower likelihood risks in other tasks, since their one-to-many phenomenon is not as severe as dialogue generation~\citep{wei2019neural, bao-etal-2020-plato, wen2022equal}.

Referring back to Table~\ref{tab:exp-rslt}, we see that mode-averaging KL distillation is preferred over RKL for less multi-modal tasks, such as machine translation (which has a low TeacherDist score), whereas mode-collapsing RKL is preferred for highly multi-modal tasks, such as dialogue generation (which has a higher TeacherDist score).

Last, our symmetric distilling objectives (JS and TVD) generally have moderate likelihood and coverage risks between the two extremes.
This shows that they achieve a compromise between mode collapsing and averaging, allowing them to yield high performance in all tasks (Table~\ref{tab:exp-rslt}).

\textbf{Analysis of the student size.} We analyze our \textsc{$f$-distill} variants with different student sizes in comparison with the SeqKD model. Due to the limited time and resources, we chose the DART dataset as our testbed. We reduced the student model to different sizes by changing the number of encoder layers, as we had already used a single-layer decoder following the suggested architecture in~\citet{kasai2020deep}. Results are shown in Figure~\ref{fig:student-scaling}.

As seen, our \fdistill{} outperforms SeqKD across all model sizes.
The symmetric losses (JS and TVD) also consistently outperform the asymmetric ones (KL and RKL).
This is consistent with our main results and further validates the effectiveness and robustness of our \textsc{$f$-distill} framework.

\begin{figure}[!t]
\vspace*{-0.2cm}
    \centering
    \includegraphics[width=0.75\columnwidth]{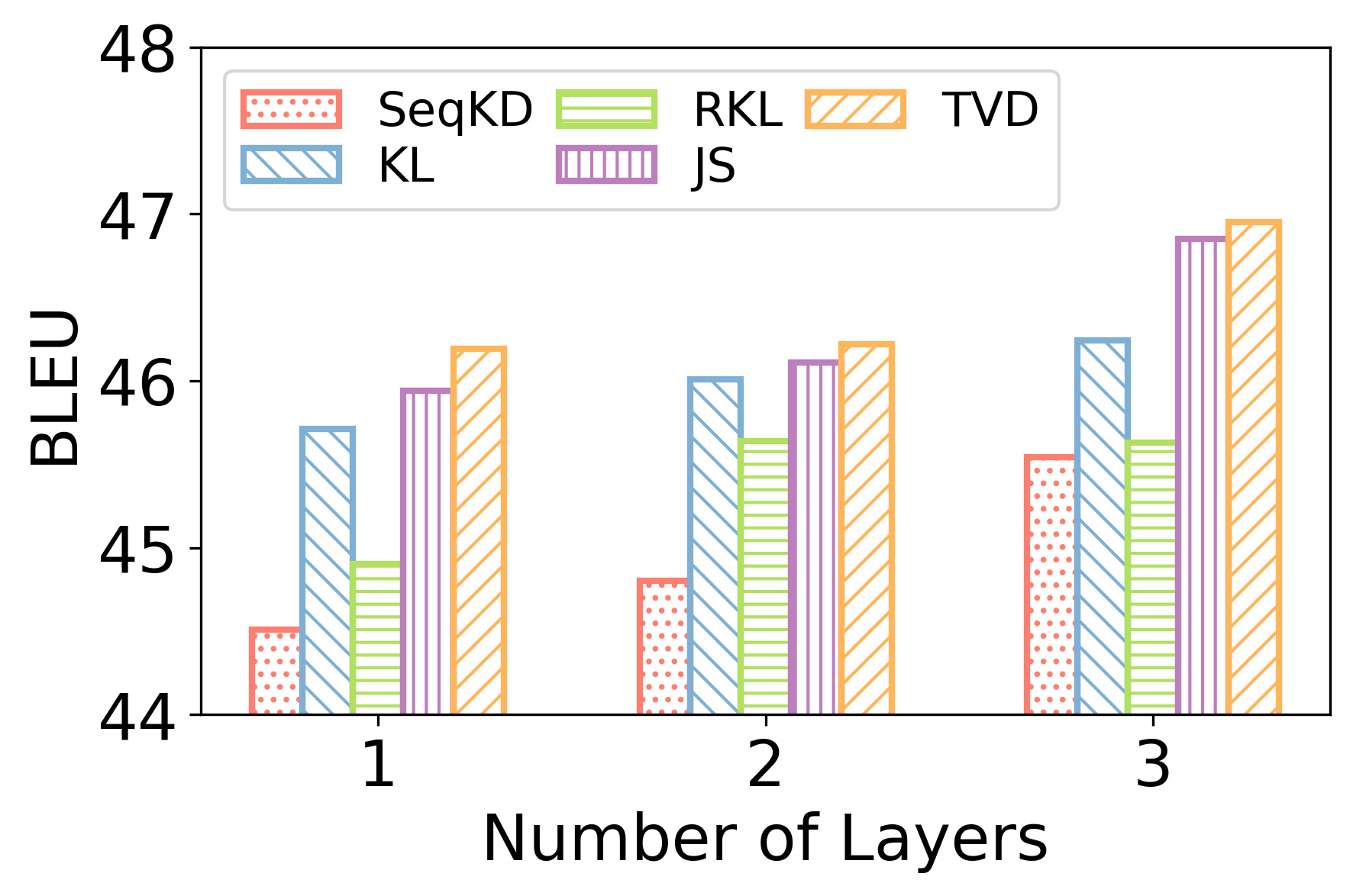}
    \vspace*{-0.2cm}\caption{Comparison of KD methods when the student model has different numbers of encoder layers. Results were obtained on the DART dataset.}
\label{fig:student-scaling}
\vspace*{-0.4cm}
\end{figure}
\begin{table}[!t]
\bigskip
\centering
\resizebox{\columnwidth}{!}{
\begin{tabular}{|l|ccc|}
\hline
Model         & BLEU4 & BERTScore & Speedup \\ \hline\hline
\multicolumn{4}{|c|}{JS distillation} \\\hline
Online            & 46.85 & 82.02  & 1.00x   \\
Offline (our method)  & 46.85 & 81.93  & 2.25x   \\ \hline\hline
\multicolumn{4}{|c|}{TVD distillation} \\\hline
Online           & 46.57 & 82.03  & 1.00x   \\
Offline (our method) & 46.95 & 82.08  & 2.31x   \\ \hline
\end{tabular}
}
\caption{Training efficiency on the DART dataset. \textbf{Online:} We re-sample sequences from the teacher model in every epoch. \textbf{Offline:} The teacher's samples are obtained beforehand and fixed during training. Note that we always re-sample from the student model because the student is constantly being updated.
}
\label{tab:efficiency}\vspace*{-0.2cm}
\end{table}
\textbf{Analysis of training efficiency.} Our \fdistill\ involves sampling sequences from the teacher. We propose an offline approach that obtains the teacher's samples before training.
We analyze the efficiency of offline sampling for JS and TVD distillations by comparing them with their online counterparts.
We ran this experiment on an NVidia RTX A6000 GPU and an Intel Xeon Gold 5317 CPU.\footnote{To obtain a rigorous time estimate, we ran efficiency analysis on an unshared, consumer-grade server, whereas other experiments were run on clusters (Appendix~\ref{apdx:exp-details}).}

As seen in Table~\ref{tab:efficiency}, the offline variant achieves comparable performance, while the training speed is more than doubled. This is expected, as the offline distilling methods do not require inference from the teacher model during training, which constitutes a significant portion of the training process.
This shows that our symmetric distilling methods can achieve high performance without the need for sampling from both the teacher and student.

\textbf{Human Evaluation.}
We further validated \fdistill\ by human evaluation, where models were rated by fluency, missing information, and hallucination between 1 to 5 on the DART dataset, following previous work~\citep{nan-etal-2021-dart,keymanesh-etal-2022-makes}.
We invited five human annotators to evaluate 50 test samples for four competing models: SeqKD, ENGINE, JS, and TVD.
For each test sample, the annotators were presented with shuffled model outputs, so they could not tell which output was generated by which model.
Results are shown in Table~\ref{tab:he}.

As seen, our \fdistill\ enables students to capture the input data records more faithfully while also retaining a high level of fluency. This is additionally supported by the $p$-values: comparing SeqKD and TVD, there is no statistically significant difference in terms of fluency ($p$-value=32.6\%); however, the improvements for missing information ($p$-value=1.28\%) and hallucination ($p$-value=0.669\%) are statistically significant.
Our human evaluation confirms the effectiveness of \fdistill.

\begin{table}[]
\centering
\resizebox{\columnwidth}{!}{
\begin{tabular}{|l|ccc|}
\hline
Model & Fluency$^\uparrow$ & MissingInfo$^\downarrow$ & Hallucination$^\downarrow$ \\ \hline
SeqKD  & \textbf{4.75} & 1.77        & 1.67 \\
ENGINE & 4.51 & 1.76        & 1.61 \\ \hline
JS     & \underline{4.72} & \underline{1.70}     & \underline{1.48} \\
TVD    & \underline{4.72} & \textbf{1.57}        & \textbf{1.45} \\ \hline
\end{tabular}
}
\caption{Human evaluation on the DART dataset. Comparing SeqKD and TVD, the one-sided Student's $t$-test gives $p$-values of  32.6\%, 1.28\%, and 0.669\% for fluency, missing information, and hallucination, respectively.}
\label{tab:he}
\end{table}

\textbf{Case Study.}
Appendix~\ref{apdx:case-study} shows example outputs for our \fdistill\ variants. Indeed, we observe KL distillation yields short and generic utterances that are believed to be an indicator of mode averaging~\cite{wei2019neural,bao-etal-2020-plato}. Our symmetric losses (JS and TVD) are able to generate more meaningful, fluent, and coherent sentences.

\section{Related Work}

Knowledge distillation (KD) is pioneered by \citet{bucilua2006model}, who use an ensemble model as the teacher to train a single-model student by minimizing the squared difference between their predicted logits.
\citet{hinton2015distilling} propose to directly learn from the output probabilities by minimizing their KL divergence.
\citet{sun2019pkd} propose patient knowledge distillation (PKD), which requires the student to learn from the teacher's intermediate layers.
\citet{jiao-etal-2020-tinybert} propose TinyBERT, extending knowledge distillation for Transformer models by additional treatments on the attention layers.
Other recent distilling methods include finding the optimal layer mapping between two models~\citep{li2020bertEMD, jiao2021improving} and learning from multiple teachers~\citep{multi-teacher-3,wu2021one,li-etal-2022-unsupervised-multiple}.

The success of KD has since sparked significant interest in its applications to text generation.
\citet{kim-rush-2016-sequence} investigate sequence-level knowledge distillation (SeqKD) for neural machine translation, where they use sampled, hard sequences to approximate the KL divergence.  \citet{tu-etal-2020-engine} train a student model by minimizing the energy function defined by a teacher model, which we show is an approximation to reverse KL distillation.
\citet{lin-etal-2020-autoregressive} propose imitation-based KD, where the teacher provides oracle probabilities on student-sampled partial sequences to address the exposure bias problem.
Further, KD has been extensively used to train non-autoregressive text generation models to reduce the complexity of the training data~\citep{gu2018non,shao-etal-2022-one,Huang_Zhou_Zaïane_Mou_Li_2022}.

It is noted that our \textsc{$f$-distill} requires meaningful student sampling and thus is built upon existing KD techniques~\citep{shleifer2020pre}, including word-level and intermediate-layer KD. Nevertheless, it shows that our approach achieves an add-on performance improvement, and that our contributions are orthogonal to previous work. 

Besides KD, common model compression techniques include parameter pruning and sparse modeling.
Parameter pruning first trains a dense network and then removes certain neural weights in hopes of not significantly affecting the model performance~\citep{classic-pruning,liu-etal-2018-efficient,fan-etal-2021-layer}.
Alternatively, one may apply sparse modeling techniques such as regularization during the training process to ensure zero-valued parameters~\citep{frankle2018lottery,louizos2018learning,Tang_Zhao_Wang_Luo_Xie_Zeng_2022}.
Our work does not follow these directions, as we consider the knowledge distilling setting.

Regarding the $f$-divergence function, it has many applications in the machine learning literature.
The standard cross-entropy training is equivalent to minimizing the KL divergence between the ground-truth label distribution (often one-hot) and model distribution~\cite{bishop2006pattern}.
Generative adversarial networks~\citep{goodfellow2014GAN} minimize the Jensen--Shannon divergence by simultaneously training a generator and a discriminator against each other.
\citet{zhao2020rkl} minimize  $\alpha$-divergence for adversarial learning, which generalizes KL and RKL,  and is a special case of $f$-divergence functions.
\citet{pmlr-v139-zhang21n} use total variation distance as a regularizer to encourage the model to predict more distinguishable probabilities.
Further, JSD is used in computer vision KD~\cite{Yin_2020_CVPR,NEURIPS2021_63dc7ed1}, but their tasks do not involve sequential data and the underlying techniques largely differ from our approach.
To the best of our knowledge, we are the first to systematically formulate sequence-level knowledge distillation as $f$-divergence minimization.

\section{Conclusion}

We propose \textsc{$f$-distill}, a family of sequence-level distilling methods beyond minimizing the KL divergence.
Under our framework, we propose and analyze four variants: KL, RKL, JS, and TVD distillations, where existing SeqKD and ENGINE are approximations of KL and RKL  variants; we further derive step-wise decomposition for our \textsc{$f$-distill}.
Results on four text generation tasks show \textsc{$f$-distill} consistently outperforms existing KD methods, and that our symmetric losses (JS and TVD) outperform asymmetric ones by avoiding extreme mode averaging and collapsing.

\section{Limitations}

Our \textsc{$f$-distill} variants are less efficient to train than SeqKD and ENGINE, as we require the teacher's soft probabilities instead of hard, sampled sequences.
However, our methods achieve a significant performance improvement, and more importantly, the additional training time does not affect inference when the model is deployed. This follows the spirit of knowledge distillation in general, i.e., to obtain a small and efficient model for deployment.

Another potential threat to validity is that we have not reported multi-run statistics. In our preliminary experiments, we ran our approach multiple times and found results were generally consistent. Due to our excessive experimentation (estimated at 2000 GPU hours), it is not possible to run each model multiple times. 
 We instead adopted a wide range of established automatic metrics, consistently showing the effectiveness of our approach. We further conducted in-depth analyses to better understand our proposed framework. 
We deem multi-run statistics not crucial to this paper, as this paper does not purely focus on empirical analysis. Rather, our main contributions lie in the novel machine learning framework,  \fdistill, and the theoretical connections between step-wise and sequence-level $f$-divergence functions.

Finally, a limitation of the formally published paper in the proceedings of ACL'23 is that we could not foresee the discussions during the conference, which are highlighted in the next section of this arXiv manuscript.

\section{Notes from ACL'23 Conference}
During the ACL conference, we received a number of questions and feedback, based on which we would like to make two clarifications.

1) Step-wise decomposition for JS distillation requires the conditional of the middle distribution $m(\mathrm Y_t | \mathbf Y_{1:{t-1}})$. Although it is tempting to have 
\begin{align} m(\mathrm Y_t | \mathbf Y_{1:{t-1}}) = \frac12 p(\mathrm Y_t | \mathbf Y_{1:{t-1}}) + \frac12 q_\theta(\mathrm Y_t | \mathbf Y_{1:{t-1}})\label{eq:wrong}
\end{align}
the correct formula should be 
\begin{align}m(\mathrm Y_t | \mathbf Y_{1:{t-1}}) = \frac{m(\mathbf Y_{1:t})}{ m(\mathbf Y_{1:t-1})}\label{eq:correct}
\end{align}
which is not the same as \eqref{eq:wrong}. We realize that we made the above mistake in our implementation, which nevertheless works empirically well and can be thought of as an approximation. It is also noted that calculating \eqref{eq:correct} requires storing the probabilities for each step and would be less efficient.

2) Certain \fdistill\ variants (RKL, JS, and TVD) involve sampling from the student.  Although we have derived step-wise decompositions in Eqns.~\eqref{eq:rkl}, \eqref{eq:JS-main}, and \eqref{eq:TVD}, directly applying backpropogation to them is unable to give the partial gradient with respect to the sampling parameters. In other words, a stop-gradient operation is performed for $q_\theta$ under the expectation~$\mathbb E[\cdot]$. 
A direct optimization requires reinforcement learning, which is known to be unstable for text generation~\citep{Yu_Zhang_Wang_Yu_2017} and is not considered in our approach.
On the other hand, our method optimizes the parameters greedily step by step, which nevertheless pushes the student distribution towards the teacher distribution: a loss of zero is achieved when the student and teacher are the same, no matter whether we propagate the gradient to the sampling distribution.
In this way, we are able to achieve meaningful results, while circumventing the difficulties of RL.

\section*{Acknowledgments}

We thank Wai Hong Ong for discussing the technical details of $m(\cdot)$ in JS distillation, and Lucas Torroba Hennigen for discussing the connection between \fdistill\ and reinforcement learning.

We also thank all reviewers and chairs for their valuable comments. The research is supported in part by the Natural Sciences and Engineering Research Council of Canada (NSERC) under Grant No. RGPIN2020-04465, the Amii Fellow Program, the Canada CIFAR AI Chair Program, a UAHJIC project, a donation from DeepMind, and the Digital Research Alliance of Canada (alliancecan.ca).

\bibliography{custom}
\bibliographystyle{acl_natbib}
\newpage

\appendix

\onecolumn

\pagebreak

\section{Proof of Theorem~\ref{thm:decomposition}} \label{apdx:proof}

\decomposition*

\begin{proof}

\textbf{[Part (a)]} We first consider the JS decomposition. Let $p$ and $q_\theta$ be the predicted distribution for the teacher and student, respectively. Let $m(\mathbf Y) = \frac12 p(\mathbf Y) + \frac12 q_\theta(\mathbf Y)$ be their average. We claim that JS divergence between two length-$T$ sequence\footnote{In practice, $T$ can be thought of as the maximum length. Alternatively, we may consider varying-length sequences by a mixture of different values of $T$.} distributions can be decomposed step by step as
\begin{align}
    D_{\text{JS}}(p(\mathbf Y_{1:T})\|q_\theta(\mathbf Y_{1:T})) :=&\ \frac12 \E_{\mathbf Y_{1:T} \sim p} \left[ \log \frac{p(\mathbf Y_{1:T})}{m(\mathbf Y_{1:T})} \right]
    + \frac12 \E_{\mathbf Y_{1:T}' \sim q_\theta} \left[ \log \frac{q_\theta (\mathbf Y_{1:T}')}{m(\mathbf Y_{1:T}')} \right] \label{eq:JS-apdx-definition} \\ 
    =&\ \frac12  \sum_{t=1}^T \E_{\mathbf Y_{1:t-1} \sim p} \left[\sum_{\mathrm Y_t} p(\mathrm Y_t|\mathbf Y_{1:t-1}) \log \frac{p(\mathrm Y_t | \mathbf Y_{1:t-1})}{m(\mathrm Y_t | \mathbf Y_{1:t-1})}  \right] \nonumber \\
    +&\ \frac12 \sum_{t=1}^T \E_{\mathbf Y'_{1:t-1}\sim q_\theta} \left[ \sum_{\mathrm Y_t'} q_\theta(\mathrm Y_t' | \mathbf Y'_{1:t-1}) \log \frac{q_\theta(\mathrm Y'_t | \mathbf Y'_{1t-1})}{m(\mathrm Y'_t | \mathbf Y'_{1:t-1})} \right] \label{eq:JS-apdx-main}
\end{align}
For implementation, we use Monte Carlo (MC) sampling to approximate $\mathbb E_{\mathbf Y_{1:t-1}\sim p}[\cdot]$ and $\mathbb E_{\mathbf Y'_{1:t-1}\sim q_\theta}[\cdot]$, suggested by Eqn.~\eqref{eq:JS-main}. Then, we explicitly enumerate all $\mathrm Y_t$ and $\mathrm Y'_t$, because a summation over all sequences is not tractable but a step-by-step summation over words is tractable. Compared with a direct MC approximation for~\eqref{eq:JS-apdx-definition}, such step-wise decomposition allows us to propagate gradient into all the words (denoted by $\mathrm Y_t$ for the teacher and $\mathrm Y'_t$ for the student) for every step $t$.

In fact, the partially sampled sequences are reused for the summation over $t=1,\cdots, T$. That is to say, we will first sample the sequences $\mathbf y_{1:T-1}\sim p$ and $\mathbf y_{1:T-1}'\sim q_\theta$ and then compute the summation; thus, the complexity is linear rather than quadratic. 

To prove \eqref{eq:JS-apdx-main}, we first focus on the first term of \eqref{eq:JS-apdx-definition}:
\begin{align}\label{eq:js1}
    & \E_{\mathbf Y_{1:T} \sim p} \left[ \log \frac{p(\mathbf Y_{1:T})}{m(\mathbf Y_{1:T})} \right] \\ \label{eq:js2}
     =& \E_{\mathbf Y_{1:T}\sim p} \left[ \log \prod_{t=1}^T \frac{p(\mathrm Y_{t} | \mathbf Y_{1:t-1})}{m(\mathrm Y_{t} | \mathbf Y_{1:t-1})} \right] \\ \label{eq:js3}
    =& \E_{\mathbf Y_{1:T}\sim p} \Biggl[ \log \prod_{t=1}^{T-1} \frac{ p(\mathrm Y_{t} | \mathbf Y_{1:t-1})}{m(\mathrm Y_{t} | \mathbf Y_{1:t-1})} + \log \frac{p(\mathrm Y_{T} | \mathbf Y_{1:T-1})}{m(\mathrm Y_{T} | \mathbf Y_{1:T-1})} \Biggr]  \\ \label{eq:js4}
    =& \E_{\mathbf Y_{1:T}\sim p} \left[ \log \prod_{t=1}^{T-1} \frac{p(\mathrm Y_{t} | \mathbf Y_{1:t-1})}{m(\mathrm Y_{t} | \mathbf Y_{1:t-1})} \right] + \E_{\mathbf Y_{1:T}\sim p} \left[ \log \frac{p(\mathrm Y_{T} | \mathbf Y_{1:T-1})}{m(\mathrm Y_{T} | \mathbf Y_{1:T-1})} \right] \\ \label{eq:js5}
    =& \E_{\mathbf Y_{1:T-1} \sim p} \left[ \log \frac{p(\mathbf Y_{1:T-1})}{m(\mathbf Y_{1:T-1})} \right] + \E_{\mathbf Y_{1:T-1} \sim p} \left[\sum_{\mathrm Y_{T}} p(\mathrm Y_{T}|\mathbf Y_{1:T-1}) \log \frac{p(\mathrm Y_{T} | \mathbf Y_{1:T-1})}{m(\mathrm Y_{T} | \mathbf Y_{1:T-1})} \right]
\end{align}
where \eqref{eq:js2} decomposes $p(\bfY_{1:T})$ and $m(\bfY_{1:T})$; \eqref{eq:js3} and \eqref{eq:js4} split the $T$th step out. In \eqref{eq:js5}, the first term drops $\rmY_T$ because it does not occur in the expectation, and we rewrite the second term by making the summation over $\mathrm Y_t$ explicit in accordance with our sampling procedure. 

Then, we can unroll the first term of \eqref{eq:js5} recursively, resulting in
\begin{align}\label{eq:js6}
     \E_{\mathbf Y_{1:T} \sim p} \left[ \log \frac{p(\mathbf Y_{1:T})}{m(\mathbf Y_{1:T})} \right] = 
\sum_{t=1}^T \E_{\mathbf Y_{1:t-1} \sim p} \left[\sum_{\mathrm Y_t} p(\mathrm Y_t|\mathbf Y_{1:t-1}) \log \frac{p(\mathrm Y_t | \mathbf Y_{1:t-1})}{m(\mathrm Y_t | \mathbf Y_{1:t-1})}  \right]\end{align}

Likewise, the term $\E_{\mathbf Y_{1:T}' \sim q_\theta} \left[ \log \frac{q_\theta (\mathbf Y_{1:T}')}{m(\mathbf Y_{1:T}')} \right]$ in \eqref{eq:JS-apdx-definition} is treated in a similar fashion, concluding our proof for JS decomposition.

We state KL and RKL decompositions below. Their proofs are similar and thus omitted.

\begin{align}
    D_{\text{KL}}(p(\mathbf Y_{1:T})\|q_\theta(\mathbf Y_{1:T})) =& \sum_{t=1}^T \E_{\mathbf Y_{1:t-1} \sim p} \left[\sum_{\mathrm Y_t} p(\mathrm Y_t|\mathbf Y_{1:t-1}) \log \frac{p(\mathrm Y_t | \mathbf Y_{1:t-1})}{q_\theta(\mathrm Y_t | \mathbf Y_{1:t-1})}  \right] \\
    D_{\text{RKL}}(p(\mathbf Y_{1:T})\|q_\theta(\mathbf Y_{1:T})) =& \sum_{t=1}^T \E_{\mathbf Y'_{1:t-1}\sim q_\theta} \left[ \sum_{\mathrm Y_t'} q_\theta(\mathrm Y_t' | \mathbf Y_{1:t-1}') \log \frac{q_\theta(\mathrm Y'_t | \mathbf Y'_{1:t-1})}{p(\mathrm Y'_t | \mathbf Y'_{1:t-1})} \right] 
\end{align}

\bigskip
\textbf{[Part (b)]} This part shows that the same step-wise decomposition for TVD is an upper bound:
\begin{align}
D_{\text{TVD}}(p(\mathbf Y_{1:T}) \| q_\theta(\mathbf Y_{1:T})) :=& \frac12 \sum_{\mathbf Y_{1:T}} |q_\theta(\mathbf Y_{1:T}) - p(\mathbf Y_{1:T})| \\
\le& \frac12 \left[\rule{0em}{9mm}\right.
 \frac12 \sum_{t=1}^T \E_{\mathbf Y_{1:t-1} \sim p} \left[ \sum_{\mathrm Y_t} \Big| q_\theta (\mathrm Y_t | \mathbf Y_{1:t-1}) - p(\mathrm Y_t | \mathbf Y_{1:t-1}) \Big| \right] \\
 &+\frac12 \sum_{t=1}^T \E_{\mathbf Y'_{1:t-1} \sim q_\theta} \Bigg[ \sum_{\mathrm Y'_t} \Big| q_\theta (\mathrm Y'_t | \mathbf Y'_{1:t-1}) - p(\mathrm Y'_t | \mathbf Y'_{1:t-1}) \Big| \Bigg] \left.\rule{0em}{9mm}\right]
 \label{eq:thm-tvd}
\end{align}

We again start by re-writing the TVD loss in a recursive form

\begin{align}
    & D_{\text{TVD}}(p(\mathbf 
 Y_{1:T}) \| q_\theta(\mathbf Y_{1:T}))  = \frac12 \sum_{\mathbf Y_{1:T}} |q_\theta(\mathbf Y_{1:T}) - p(\mathbf Y_{1:T})| \label{eq:tvd1} \\
    & = \frac12 \sum_{\mathbf Y_{1:T-1}} \sum_{\mathrm Y_T} |q_\theta(\mathbf Y_{1:T-1}) q_\theta (\mathrm Y_T | \mathbf Y_{1:T-1}) - p(\mathbf Y_{1:T-1}) p(\mathrm Y_T | \mathbf Y_{1:T-1})| \label{eq:tvd2} \\
    & = \frac12 \sum_{\mathbf Y_{1:T-1}} \sum_{\mathrm Y_T} \frac{p(\mathbf Y_{1:T-1})}{p(\mathbf Y_{1:T-1})} |q_\theta(\mathbf Y_{1:T-1}) q_\theta (\mathrm Y_T | \mathbf Y_{1:T-1}) - p(\mathbf Y_{1:T-1}) p(\mathrm Y_T | \mathbf Y_{1:T-1})| \label{eq:tvd3} \\
    & = \frac12 \sum_{\mathbf Y_{1:T-1}} p(\mathbf Y_{1:T-1}) \sum_{\mathrm Y_T} \left| \frac{q_\theta(\mathbf Y_{1:T-1}) q_\theta (\mathrm Y_T | \mathbf Y_{1:T-1})}{p (\mathbf Y_{1:T-1})} - p(\mathrm Y_T | \mathbf Y_{1:T-1}) \right| \label{eq:tvd4} \\
    & = \scalemath{0.92} { \frac12 \E_{\mathbf Y_{1:T-1}\sim p} \left[ \sum_{\mathrm Y_T} \Big| \frac{q_\theta(\mathbf Y_{1:T-1}) q_\theta (\mathrm Y_T | \mathbf Y_{1:T-1})}{p (\mathbf Y_{1:T-1})} - q_\theta(\mathrm Y_T | \mathbf Y_{1:T-1}) + q_\theta(\mathrm Y_T | \mathbf Y_{1:T-1}) - p(\mathrm Y_T | \mathbf Y_{1:T-1}) \Big| \right] \label{eq:tvd5} } \\
    & = \scalemath{0.92} { \frac12 \E_{\mathbf Y_{1:T-1}\sim p} \left[ \sum_{\mathrm Y_T} \Big| \frac{q_\theta(\mathrm Y_T | \mathbf Y_{1:T-1})}{p(\mathbf Y_{1:T-1})} \big( q_\theta(\mathbf Y_{1:T-1}) - p(\mathbf Y_{1:T-1}) \big)  + q_\theta(\mathrm Y_T | \mathbf Y_{1:T-1}) - p(\mathrm Y_T | \mathbf Y_{1:T-1}) \Big| \right] \label{eq:tvd6} } \\
    &\leq \scalemath{0.92}{ \frac12 \E_{\mathbf Y_{1:T-1}\sim p} \left[ \sum_{\mathrm Y_T}\left( \Big| \frac{q_\theta(\mathrm Y_T | \mathbf Y_{1:T-1})}{p(\mathbf Y_{1:T-1})} \big( q_\theta(\mathbf Y_{1:T-1}) - p(\mathbf Y_{1:T-1}) \big) \Big| + \Big| q_\theta(\mathrm Y_T | \mathbf Y_{1:T-1}) - p(\mathrm Y_T | \mathbf Y_{1:T-1}) \Big| \right)\right] \label{eq:tvd7} } \\
    &= \frac12 \E_{\mathbf Y_{1:T-1} \sim p} \left[ \sum_{\mathrm Y_T} \left| \frac{q_\theta(\mathrm Y_T | \mathbf Y_{1:T-1})}{p(\mathbf Y_{1:T-1})} \big( q_\theta(\mathbf Y_{1:T-1}) - p(\mathbf Y_{1:T-1}) \big) \right| \right] + \nonumber \\
    & \quad\,\, \frac12 \E_{\mathbf Y_{1:T-1} \sim p} \left[ \sum_{\mathrm Y_T} \left| q_\theta(\mathrm Y_T | \mathbf Y_{1:T-1}) - p(\mathrm Y_T | \mathbf Y_{1:T-1}) \right|  \right] \label{eq:tvd8} \\
    &= \frac12 \E_{\mathbf Y_{1:T-1} \sim p} \left[ \left| \frac{1}{p(\mathbf Y_{1:T-1})} \big( q_\theta(\mathbf Y_{1:T-1}) - p(\mathbf Y_{1:T-1}) \big) \right| \right] + \nonumber \\
    & \quad\,\, \frac12 \E_{\mathbf Y_{1:T-1} \sim p} \left[ \sum_{\mathrm Y_T} \left| q_\theta(\mathrm Y_T | \mathbf Y_{1:T-1}) - p(\mathrm Y_T | \mathbf Y_{1:T-1}) \right|  \right] \label{eq:tvd9} \\
    &= \frac12 \sum_{\mathbf Y_{1:T-1}} \left| q_\theta(\mathbf Y_{1:T-1}) - p(\mathbf Y_{1:T-1}) \right| + \frac12 \E_{\mathbf Y_{1:T-1} \sim p} \left[ \sum_{\mathrm Y_T} \left| q_\theta (\mathrm Y_T | \mathbf Y_{1:T-1}) - p(\mathrm Y_T | \mathbf Y_{1:T-1}) \right| \right] \label{eq:tvd10} \\
    &= \frac12 \sum_{t=1}^T \E_{\mathbf Y_{1:t-1} \sim p} \left[ \sum_{\mathrm Y_t} \left| q_\theta (\mathrm Y_t | \mathbf Y_{1:t-1}) - p(\mathrm Y_t | \mathbf Y_{<1:t}) \right| \right] \label{eq:tvd11}
\end{align}
where \eqref{eq:tvd2} breaks the sequence-level summation into the first $T-1$ steps and the last step,
\eqref{eq:tvd3} multiplies and divides $p({\bfY_{1:T-1}})$, and
\eqref{eq:tvd5} subtracts and adds $q_\theta(\rmY_T | \bfY_{1:T-1})$.
After some regrouping in \eqref{eq:tvd6}, we apply the triangle inequality in \eqref{eq:tvd7}.
In \eqref{eq:tvd8}, we break the expectation into two terms, where the first term is further simplified by summing over $\mathrm Y_T$ in \eqref{eq:tvd9} and expanding the expectation in \eqref{eq:tvd10}.
These manipulations bring the equation to a recursive form. By applying the same technique as in \eqref{eq:js6}, we may further unroll the first term in \eqref{eq:tvd10} and eventually obtain  \eqref{eq:tvd11} as an upper bound.

Likewise, we can obtain the following inequality by multiplying and dividing by $q_\theta(\mathbf y_{1:T-1})$ in \eqref{eq:tvd3}
\begin{align}
     & \mathcal L_{\text{TVD}} \le \frac12 \sum_{t=1}^T \E_{\mathbf Y'_{1:t-1} \sim q_\theta} \left[ \sum_{\mathrm Y'_t} \left| q_\theta (\mathrm Y'_t | \mathbf Y'_{1:t-1}) - p(\mathrm Y'_t | \mathbf Y'_{1:t-1}) \right| \right] \label{eq:tvd12}
\end{align}
These two upper bounds, \eqref{eq:tvd11} and \eqref{eq:tvd12}, are then combined to obtain \eqref{eq:thm-tvd}, concluding the proof.

\end{proof}

Admittedly, both \eqref{eq:tvd11} and \eqref{eq:tvd12} are valid upper bounds for the TVD divergence, but we nevertheless combine these two formulas to obtain a more computationally robust upper bound in the same spirit of JS decomposition.

\begin{table}[!b]
\centering
\resizebox{0.80\columnwidth}{!}{
\begin{tabular}{|l|l|rrr|}
\hline
\multicolumn{1}{|c|}{\multirow{2}{*}{Dataset}} &
  \multicolumn{1}{c|}{\multirow{2}{*}{Task}} &
  \multicolumn{3}{c|}{\# of Samples} \\ \cline{3-5} 
\multicolumn{1}{|c|}{} &
  \multicolumn{1}{c|}{} &
  \multicolumn{1}{c}{Train} &
  \multicolumn{1}{c}{Dev} &
  \multicolumn{1}{c|}{Test} \\ \hline
DART~\citep{nan-etal-2021-dart} &
  Data-to-Text Generation &
  30,526 &
  2,768 &
  4,159 \\
XSum~\citep{narayan-etal-2018-xsum} &
  Summarization &
  204,045 &
  11,332 &
  11,334 \\
WNT16 EN-RO~\citep{bojar-etal-2016-wmt16} &
  Machine Translation &
  100,000 &
  1,999 &
  1,999 \\
Commonsense Dialogue~\citep{zhou-etal-2021-commonsense} &
  Dialogue Generation &
  51,831 &
  6,619 &
  6,610 \\ \hline
\end{tabular}
}
\caption{Statistics of our datasets.}

\label{tab:exp-details}
\end{table}

\section{Experimental Details} \label{apdx:exp-details}

Table~\ref{tab:exp-details} shows the statistics of our datasets.
As seen, we benchmarked our models on a variety of natural language generation tasks with different data sizes. We chose state-of-the-art models as the teachers, with 200M--400M parameters. Accordingly, our students had 50M--150M parameters. 
The high performance of \fdistill\ variants across these datasets highlights the robustness of our approach.

For training, we used the Adam optimizer~\citep{adam-optimizer} with  default hyperparameters $\beta=(0.9, 0.999)$ on DART, XSum, and Commonsense Dialogue.
For WMT16 EN-RO, we followed the T5 teacher model~\citep{t52020} and used the AdaFactor optimizer~\citep{pmlr-v80-adafactor}. 
We chose a small batch size of eight to fit  the student as well as the large teacher in our GPU. 
All student models were trained for 28 epochs for pre-distillation and another 12 epochs for each distilling method, as additional training did not further improve performance.

The main experiments were conducted on AMD Milan 7413 CPUs and NVidia A100 GPUs, and the total training time was estimated at 2000 GPU hours. Note that this is not because our algorithm is slow (efficiency analyzed in Table~\ref{tab:efficiency}), but because we have extensively experimented with a variety of datasets and model variants.

\section{Case Study} \label{apdx:case-study}
Table~\ref{tab:case} shows example outputs for DART and Commonsense Dialogue.
On the DART dataset, the KL and RKL distillations fail to yield coherent responses from the input data records.
By contrast, JS and TVD distillations enable the student to generate sentences of much higher quality: they correctly recognize the name of the train as well as its origin and destination.

We additionally show an example output from the Commonsense Dialogue dataset, because the dialogue task exhibits the most severe multi-modal problem, which in turn requires the student to carefully balance mode averaging and collapsing.
As seen, the KL-distilled student generates a short and generic response, which is consistent with existing literature~\citep{wei2019neural, bao-etal-2020-plato}, explained as  mode averaging in our paper. The RKL-distilled student generates a detailed, but ungrammatical and incoherent, response.
For JS and TVD distillations, the students generate responses that are both coherent and detailed. 
The case studies confirm our main claim that JS and JVD are more effective sequence-level distilling approaches.

\begin{table}[!t]
\centering
\resizebox{\columnwidth}{!}{
\begin{tabular}{|l|l|}
\hline
\multicolumn{2}{|c|}{\textbf{DART}} \\ \hline
Input &  (11028/11027, destination, mumbai), (11028/11027, origin, Chennai), (11028/11027, train\_name, mumbai Mail)  \\
Reference & train no . 11028 / 11027 name of the train is mumbai mail has origin and destination of chennai to mumbai \\ \hline
KL & the train name of the train that originates from chennai is mumbai.                                             \\
RKL & the mumbai mumbai mumbai train name is mumbai mumbai and the destination is mumbai.                            \\
JS & the mumbai mail train goes to mumbai and originates from chennai to mumbai.                                     \\
TVD & the mumbai mail train starts from chennai to mumbai.   \\ \hline \hline
\multicolumn{2}{|c|}{\textbf{Commonsense Dialogue}} \\ \hline
Input & Quinn spent many years studying. Finally it became graduation time for him. \\
Reference & I can't describe how happy I am on this day.   \\ \hline
KL & I am done. Now is over. \\
RKL & I am going to miss my mom and dad. I am going to miss my dad and the people I have to do a lot of things.  \\
JS  & I am so excited for my graduation. I can't wait to get back to my home and spend some time with my family. \\
TVD & I am finally done packing my dorm. I can't wait to start my first year of college. \\ \hline
\end{tabular}
}
\caption{Example outputs from \fdistill\ variants.}
\label{tab:case}
\end{table}

\end{document}